\newtheorem{theorem}{Theorem}
\newtheorem{proposition}{Proposition}
\theoremstyle{definition}
\newtheorem{definition}{Definition}
\title{Stochastic And-Or Grammars: A Unified Framework and Logic Perspective\thanks{This work was supported by the National Natural Science Foundation of China (61503248).}}
\author{Kewei Tu \\
School of Information Science and Technology\\
ShanghaiTech University, Shanghai, China\\
\texttt{tukw@shanghaitech.edu.cn}}
\date{}
\begin{document}

\maketitle
\begin{abstract}
Stochastic And-Or grammars (AOG) extend traditional stochastic grammars of language to model other types of data such as images and events.
In this paper we propose a representation framework of stochastic AOGs that is agnostic to the type of the data being modeled and thus unifies various domain-specific AOGs. Many existing grammar formalisms and probabilistic models in natural language processing, computer vision, and machine learning can be seen as special cases of this framework. We also propose a domain-independent inference algorithm of stochastic context-free AOGs and show its tractability under a reasonable assumption.
Furthermore, we provide two interpretations of stochastic context-free AOGs as a subset of probabilistic logic, which connects stochastic AOGs to the field of statistical relational learning and clarifies their relation with a few existing statistical relational models.
\end{abstract}

\section{Introduction}
\label{sec:intro}
Formal grammars are a popular class of knowledge representation that is traditionally confined to the modeling of natural and computer languages. However, several extensions of grammars have been proposed over time to model other types of data such as images \cite{fu1982syntactic,Zhu06as,Jin06ca} and events \cite{Ivanov00ro,Ryoo06ro,Pei11pv}. One prominent type of extension is stochastic And-Or grammars (AOG) \cite{Zhu06as}. A stochastic AOG simultaneously models compositions (i.e., a large pattern is the composition of several small patterns arranged according to a certain configuration) and reconfigurations (i.e., a pattern may have several alternative configurations), and in this way it can compactly represent a probabilistic distribution over a large number of patterns. Stochastic AOGs can be used to parse data samples into their compositional structures, which help solve multiple tasks (such as classification, annotation, and segmentation of the data samples) in a unified manner. 
In this paper we will focus on the context-free subclass of stochastic AOGs, which serves as the skeleton in building more advanced stochastic AOGs.

Several variants of stochastic AOGs and their inference algorithms have been proposed in the literature to model different types of data and solve different problems, such as image scene parsing \cite{Zhao11ip} and video event parsing \cite{Pei11pv}.
Our first contribution in this paper is that we provide \emph{a unified representation framework} of stochastic AOGs that is agnostic to the type of the data being modeled; in addition, based on this framework we propose \emph{a domain-independent inference algorithm} that is tractable under a reasonable assumption.
The benefits of a unified framework of stochastic AOGs include the following.
First, such a framework can help us generalize and improve existing ad hoc approaches for modeling, inference and learning with stochastic AOGs.
Second, it also facilitates applications of stochastic AOGs to novel data types and problems and enables the research of general-purpose inference and learning algorithms of stochastic AOGs.
Further, a formal definition of stochastic AOGs as abstract probabilistic models makes it easier to theoretically examine their relation with other models such as constraint-based grammar formalism \cite{shieber1992constraint} and sum-product networks \cite{Poon11}. In fact, we will show that many of these related models can be seen as special cases of stochastic AOGs.

Stochastic AOGs model compositional structures based on the relations between sub-patterns. Such probabilistic modeling of relational structures is traditionally studied in the field of statistical relational learning \cite{getoor2007introduction}.
Our second contribution is that we provide \emph{probabilistic logic interpretations} of the unified representation framework of stochastic AOGs and thus show that stochastic AOGs can be seen as a novel type of statistical relational models.
The logic interpretations help clarify the relation between stochastic AOGs and a few existing statistical relational models and probabilistic logics that share certain features with stochastic AOGs (e.g., tractable Markov logic \cite{domingos2012tractable} and stochastic logic programs \cite{muggleton1996stochastic}).
It may also facilitate the incorporation of ideas from statistical relational learning into the study of stochastic AOGs and at the same time contribute to the research of novel (tractable) statistical relational models.

\section{Stochastic And-Or Grammars}
\label{sec:aog}
An AOG is an extension of a constituency grammar used in natural language parsing \cite{Manning99book}. Similar to a constituency grammar, an AOG defines a set of valid hierarchical compositions of atomic entities. However, an AOG differs from a constituency grammar in that it allows atomic entities other than words and compositional relations other than string concatenation. A stochastic AOG models the uncertainty in the composition by defining a probabilistic distribution over the set of valid compositions.

Stochastic AOGs were first proposed to model images \cite{Zhu06as,Zhao11ip,Wang13hs,rothrock2013integrating}, in particular the spatial composition of objects and scenes from atomic visual words (e.g., Garbor bases). They were later extended to model events, in particular the temporal and causal composition of events from atomic actions \cite{Pei11pv} and fluents \cite{Fire13uc}. More recently, these two types of AOGs were used jointly to model objects, scenes and events from the simultaneous input of video and text \cite{tu2014joint}.

In each of the previous work using stochastic AOGs, a different type of data is modeled with domain-specific and problem-specific definitions of atomic entities and compositions. Tu et al. \cite{tu2013unsupervised} provided a first attempt towards a more unified definition of stochastic AOGs that is agnostic to the type of the data being modeled. We refine and extend their work by introducing parameterized patterns and relations in the unified definition, which allows us to reduce a wide range of related models to AOGs (as will be discussed in section \ref{sec:aog:related}).
Based on the unified framework of stochastic AOGs, we also propose a domain-independent inference algorithm and study its tractability (section \ref{sec:aog:inf}).
Below we start with the definition of stochastic context-free AOGs, which are the most basic form of stochastic AOGs and are used as the skeleton in building more advanced stochastic AOGs. 

A \emph{stochastic context-free AOG} is defined as a 5-tuple $\langle \Sigma, N, S, \theta, R \rangle$: 
\begin{description}
\item[$\Sigma$] is a set of terminal nodes representing atomic patterns that are not decomposable; 
\item[$N$] is a set of nonterminal nodes representing high-level patterns, which is divided into two disjoint sets: And-nodes and Or-nodes; 
\item[$S \in N$] is a start symbol that represents a complete pattern; 
\item[$\theta$] is a function that maps an instance of a terminal or nonterminal node $x$ to a parameter $\theta_x$ (the parameter can take any form such as a vector or a complex data structure; denote the maximal parameter size by $m_\theta$);
\item[$R$] is a set of grammar rules, each of which takes the form of $x \rightarrow C$ representing the generation from a nonterminal node $x$ to a set $C$ of nonterminal or terminal nodes (we say that the rule is ``headed'' by node $x$ and the nodes in $C$ are the ``child nodes'' of $x$).
\end{description}
The set of rules $R$ is further divided into two disjoint sets: And-rules and Or-rules.
\begin{itemize}
\item An And-rule, parameterized by a triple $\langle r, t, f \rangle$, represents the decomposition of a pattern into a configuration of non-overlapping sub-patterns. The And-rule specifies a production $r: A \rightarrow \{x_1, x_2, \ldots, x_n\}$ for some $n \geq 2$, where $A$ is an And-node and $x_1, x_2, \ldots, x_n$ are a set of terminal or nonterminal nodes representing the sub-patterns.
A relation between the parameters of the child nodes, $t(\theta_{x_1}, \theta_{x_2}, \ldots, \theta_{x_n})$, specifies valid configurations of the sub-patterns. This so-called \emph{parameter relation} is typically factorized to the conjunction of a set of binary relations. 
A \emph{parameter function} $f$ is also associated with the And-rule specifying how the parameter of the And-node $A$ is related to the parameters of the child nodes: $\theta_A = f(\theta_{x_1}, \theta_{x_2}, \ldots, \theta_{x_n})$. 
We require that both the parameter relation and the parameter function take time polynomial in $n$ and $m_\theta$ to compute.
There is exactly one And-rule that is headed by each And-node.
\item An Or-rule, parameterized by an ordered pair $\langle r, p \rangle$, represents an alternative configuration of a pattern. The Or-rule specifies a production $r: O \rightarrow x$, where $O$ is an Or-node and $x$ is either a terminal or a nonterminal node representing a possible configuration. A conditional probability $p$ is associated with the Or-rule specifying how likely the configuration represented by $x$ is selected given the Or-node $O$. The only constraint in the Or-rule is that the parameters of $O$ and $x$ must be the same: $\theta_O = \theta_x$.
There typically exist multiple Or-rules headed by the same Or-node, and together they can be written as $O \rightarrow x_1 | x_2 | \ldots | x_n$.
\end{itemize}

Note that unlike in some previous work, in the definition above we assume deterministic And-rules for simplicity. In principle, any uncertainty in an And-rule can be equivalently represented by a set of Or-rules each invoking a different copy of the And-rule.

Fig.\ \ref{fig:ex1}(a) shows an example stochastic context-free AOG of line drawings. Each terminal or nonterminal node represents an image patch and its parameter is a 2D vector representing the position of the patch in the image. Each terminal node denotes a line segment of a specific orientation while each nonterminal node denotes a class of line drawing patterns. The start symbol $S$ denotes a class of line drawing images (e.g., images of animal faces). In each And-rule, the parameter relation specifies the relative positions between the sub-patterns and the parameter function specifies the relative positions between the composite pattern and the sub-patterns.

\begin{figure*}[t]\centering
	\subfigure[]{\includegraphics[scale=.4]{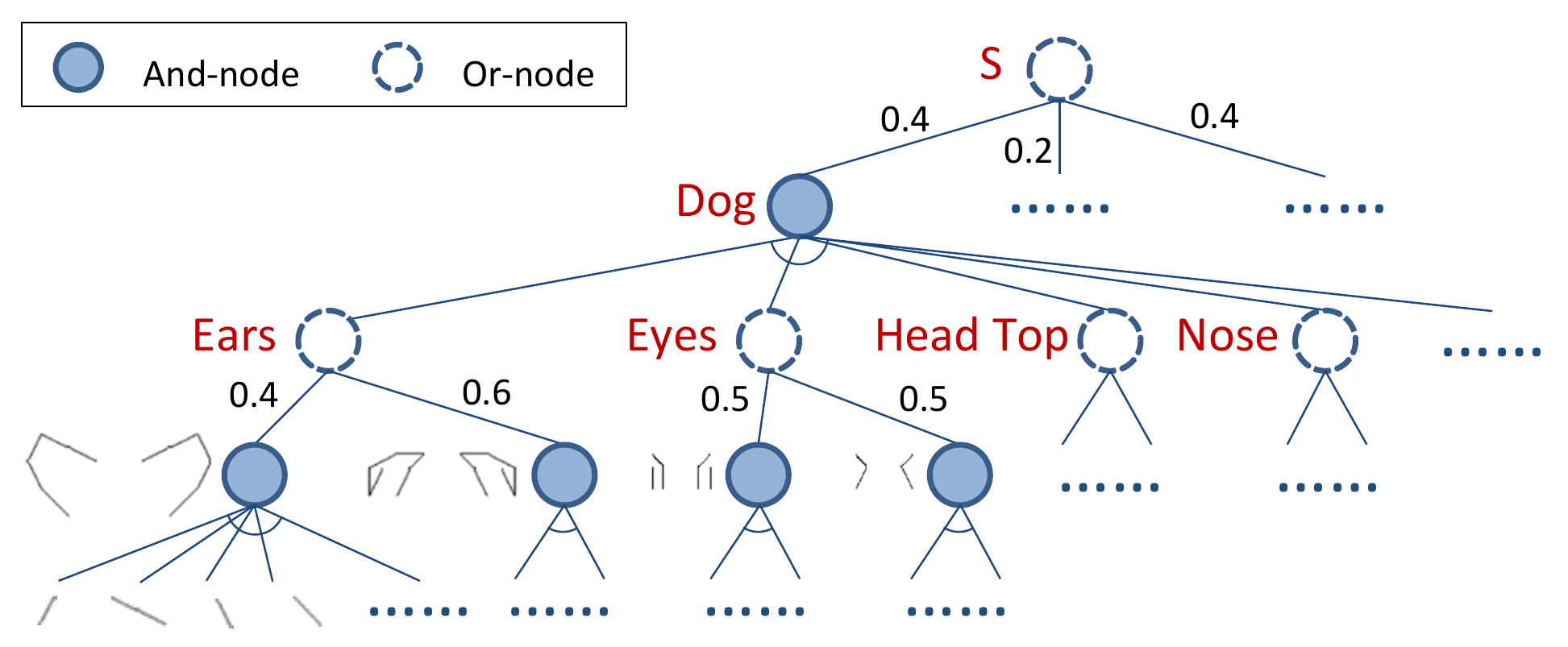}}
	\hspace{1ex}
	\subfigure[]{\includegraphics[scale=.4]{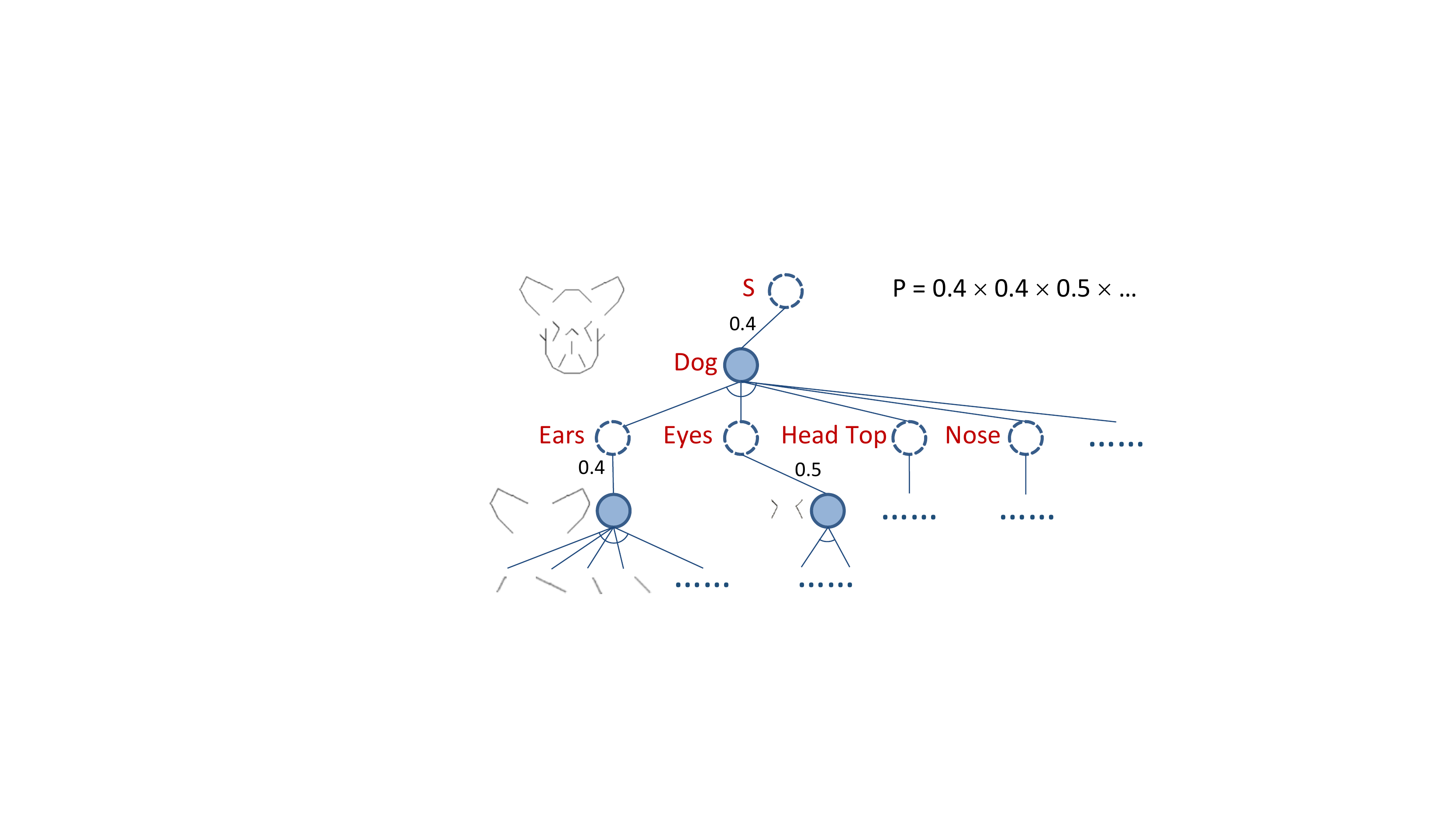}}
\caption{(a) A graphical representation of an example stochastic AOG of line drawings of animal faces. Each And-rule is represented by an And-node and all of its child nodes in the graph. The spatial relations within each And-rule are not shown for clarity. Each Or-rule is represented by an Or-node and one of its child nodes, with its probability shown on the corresponding edge.
(b) A line drawing image and its compositional structure generated from the example AOG. Again, the spatial relations between nodes are not shown for clarity. The probability of the compositional structure is partially computed at the top right.}
\label{fig:ex1}
\end{figure*}

With a stochastic context-free AOG, one can generate a compositional structure by starting from a data sample containing only the start symbol $S$ and recursively applying the grammar rules in $R$ to convert nonterminal nodes in the data sample until the data sample contains only terminal nodes. The resulting compositional structure is a tree in which the root node is $S$, each non-leaf node is a nonterminal node, and each leaf node is a terminal node; in addition, for each appearance of And-node $A$ in the tree, its set of child nodes in the tree conforms to the And-rule headed by $A$, and for each appearance of Or-node $O$ in the tree, it has exactly one child node in the tree which conforms to one of the Or-rules headed by $O$. 
The probability of the compositional structure is the product of the probabilities of all the Or-rules used in the generation process.
Fig.\ \ref{fig:ex1}(b) shows an image and its compositional structure generated from the example AOG in Fig.\ \ref{fig:ex1}(a).
Given a data sample consisting of only atomic patterns, one can also infer its compositional structure by parsing the data sample with the stochastic context-free AOG. We will discuss the parsing algorithm later.

Our framework is flexible in that it allows different types of patterns and relations within the same grammar. Consider for example a stochastic AOG modeling visually grounded events (e.g., videos of people using vending-machines). We would have two types of terminal or nonterminal nodes that model events and objects respectively. An event node represents a class of events or sub-events, whose parameter is the start/end time of an instance event. An object node represents a class of objects or sub-objects (possibly in a specific state or posture), whose parameter contains both the spatial information and the time interval information of an instance object. We specify temporal relations between event nodes to model the composition of an event from sub-events; we specify spatial relations between object nodes to model the composition of an object from its component sub-objects as well as the composition of an atomic event from its participant objects; we also specify temporal relations between related object nodes to enforce the alignment of their time intervals.

Note that different nonterminal nodes in an AOG may share child nodes. For example, in Fig.\ref{fig:ex1} each terminal node representing a line segment may actually be shared by multiple parent nonterminal nodes representing different line drawing patterns.
Furthermore, there could be recursive rules in an AOG, which means the direct or indirect production of a grammar rule may contain its left-hand side nonterminal. Recursive rules are useful in modeling languages and repetitive patterns.

In some previous work, stochastic AOGs more expressive than stochastic context-free AOGs are employed. A typical augmentation over context-free AOGs is that, while in a context-free AOG a parameter relation can only be specified within an And-rule, in more advanced AOGs parameter relations can be specified between any two nodes in the grammar. This can be very useful in certain scenarios. For example, in an image AOG of indoor scenes, relations can be added between all pairs of 2D faces to discourage overlap \cite{Zhao11ip}. However, such relations make inference much more difficult. 
Another constraint in context-free AOGs that is sometimes removed in more advanced AOGs is the non-overlapping requirement between sub-patterns in an And-rule. For example, in an image AOG it may be more convenient to decompose a 3D cube into 2D faces that share edges \cite{Zhao11ip}.
We will leave the formal definition and analysis of stochastic AOGs beyond context-freeness to future work.

\subsection{Related Models and Special Cases}\label{sec:aog:related}
Stochastic context-free AOGs subsume many existing models as special cases. Because of space limitation, here we informally describe these related models and their reduction to AOGs and leave the formal definitions and proofs in \ref{sm:sec:rmsc}.

Stochastic context-free grammars (SCFG) are clearly a special case of stochastic context-free AOGs. Any SCFG can be converted into an And-Or normal form that matches the structure of a stochastic AOG \cite{Tu08}. In a stochastic AOG representing a SCFG, each node represents a string and the parameter of a node is the start/end positions of the string in the complete sentence; the parameter relation and parameter function in an And-rule specify string concatenation, i.e., the substrings must be adjacent and the concatenation of all the substrings forms the composite string represented by the parent And-node.

There have been a variety of grammar formalisms developed in the natural language processing community that go beyond the concatenation relation of strings. For examples, in some formalisms the substrings are interwoven to form the composite string \cite{pollard1984generalized,johnson1985parsing}. More generally, in a grammar rule a linear regular string function can be used to combine lists of substrings into a list of composite strings, as in a linear context-free rewriting system (LCFRS) \cite{weir1988characterizing}.
All these grammar formalisms can be represented by context-free AOGs with each node representing a list of strings, the node parameter being a list of start/end positions, and in each And-rule the parameter relation and parameter function defining a linear regular string function. Since LCFRSs are known to generate the larger class of mildly context-sensitive languages, context-free AOGs when instantiated to model languages can be at least as expressive as mildly context-sensitive grammars.

Constraint-based grammar formalisms \cite{shieber1992constraint} are another class of natural language grammars, which associate so-called feature structures to nonterminals and use them to specify constraints in the grammar rules. Such constraints can help model natural language phenomena such as English subject-verb agreement and underlie grammatical theories such as head-driven phrase structure grammars \cite{Pollard1988information}. It is straightforward to show that constraint-based grammar formalisms are also special cases of context-free AOGs (with a slight generalization to allow unary And-rules), by establishing equivalence between feature structures and node parameters and between constraints and parameter relations/functions.


In computer vision and pattern recognition, stochastic AOGs have been applied to a variety of tasks as discussed in the previous section. In addition, several other popular models, such as the deformable part model \cite{felzenszwalb2008discriminatively} and the flexible mixture-of-parts model \cite{yang2011articulated}, can essentially be seen as special cases of stochastic context-free AOGs in which the node parameters encode spatial information of image patches and the parameter relations/functions encode spatial relations between the patches.

Sum-product networks (SPN) \cite{Poon11} are a new type of deep probabilistic models that extend the ideas of arithmetic circuits \cite{darwiche2003differential} and AND/OR search spaces \cite{dechter2007and} and can compactly represent many probabilistic distributions that traditional graphical models cannot tractably handle.
It can be shown that any decomposable SPN has an equivalent stochastic context-free AOG: Or-nodes and And-nodes of the AOG can be used to represent sum nodes and product nodes in the SPN respectively, all the node parameters are set to null, parameter relations always return true, and parameter functions always return null.
Because of this reduction, all the models that can reduce to decomposable SPNs can also be seen as special cases of stochastic context-free AOGs, such as thin junction trees \cite{bach2001thin}, mixtures of trees \cite{meila2001learning} and latent tree models \cite{choi2011learning}.

\subsection{Inference}
\label{sec:aog:inf}
The main inference problem associated with stochastic AOGs is parsing, i.e., given a data sample consisting of only terminal nodes, infer its most likely compositional structure (parse). A related inference problem is to compute the marginal probability of a data sample.
It can be shown that both problems are NP-hard (see \ref{sm:sec:np} for the proofs).
Nevertheless, here we propose an exact inference algorithm for stochastic context-free AOGs that is tractable under a reasonable assumption on the number of valid compositions in a data sample.
Our algorithm is based on bottom-up dynamic programming and can be seen as a generalization of several previous exact inference algorithms designed for special cases of stochastic AOGs (such as the CYK algorithm for text parsing).

Algorithm \ref{alg:inf} shows the inference algorithm that returns the probability of the most likely parse. After the algorithm terminates, the most likely parse can be constructed by recursively backtracking the selected Or-rules from the start symbol to the terminals. To compute the marginal probability of a data sample, we simply replace the max operation with sum in line \ref{alg:max2} of Algorithm \ref{alg:inf}.

In Algorithm \ref{alg:inf} we assume the input AOG is in a generalized version of Chomsky normal form, i.e., (1) each And-node has exactly two child nodes which must be Or-nodes, (2) the child nodes of Or-nodes must not be Or-nodes, and (3) the start symbol $S$ is an Or-node.
By extending previous studies \cite{lange2009cnf}, it can be shown that any context-free AOG can be converted into this form and both the time complexity of the conversion and the size of the new AOG is polynomial in the size of the original AOG. We give more details in \ref{sm:sec:cnf}.

\renewcommand{\algorithmicrequire}{\textbf{Input:}}
\renewcommand{\algorithmicensure}{\textbf{Output:}}
\begin{algorithm}[t]
\begin{algorithmic}[1]\small

	\REQUIRE a data sample $X$ consisting of a set of non-duplicate instances of terminal nodes, a stochastic context-free AOG $G$ in Chomsky normal form
	\ENSURE the probability $p^*$ of the most likely parse of $X$

	\STATE Create an empty map $M$ \quad \COMMENT{$M[i,O,\theta,T]$ stores the probability of a valid composition of size $i$ with root Or-node $O$, parameter $\theta$, and set $T$ of terminal instances.}
	\FORALL{$x \in X$} \label{alg:s1:s}
		\STATE $a \leftarrow$ the terminal node that $x$ is an instance of
		\STATE $\theta \leftarrow$ the parameter of $x$
		\FORALL{Or-rule $\langle O \rightarrow a,\ p \rangle$ in $G$}
			\STATE $M[1,O,\theta,\{x\}] \leftarrow p$
		\ENDFOR
	\ENDFOR \label{alg:s1:e}
	
	\FOR{$i=2$ \TO $|X|$} \label{alg:si:s}
		\FOR{$j=1$ \TO $i-1$}
			\FORALL{$\langle O_1, \theta_1, p_1 \rangle : M[j, O_1, \theta_1, T_1] = p_1$}
				\FORALL{$\langle O_2, \theta_2, p_2 \rangle : M[i-j, O_2, \theta_2, T_2] = p_2$}
					\FORALL{And-rule $\langle A \rightarrow O_1 O_2,\ t,\ f \rangle$ in $G$}
						\IF{$t(\theta_1, \theta_2) = True$ and $T_1 \bigcap T_2 = \emptyset$}
							\STATE $\phi \leftarrow f(\theta_1, \theta_2)$
							\STATE $T \leftarrow T_1 \bigcup T_2$
							\FORALL{Or-rule $\langle O \rightarrow A,\ p_O \rangle$ in $G$}
								\STATE $p \leftarrow p_O p_1 p_2$
								\IF{$M[i,O,\phi,T]$ is null}
									\STATE $M[i,O,\phi,T] \leftarrow p$
								\ELSE
									\STATE $M\![i,O,\phi,T]\!\!\leftarrow\!\max\{p,M\![i,O,\phi,T]\}$\label{alg:max2}
								\ENDIF
							\ENDFOR
						\ENDIF
					\ENDFOR
				\ENDFOR
			\ENDFOR
		\ENDFOR
	\ENDFOR \label{alg:si:e}
	
	\RETURN $\max_\theta M[|X|, S, \theta, X]$ \, \COMMENT{$S$ is the start symbol} \label{alg:ss}
\end{algorithmic}
\caption{Parsing with a stochastic context-free AOG}\label{alg:inf}
\end{algorithm}

The basic idea of Algorithm \ref{alg:inf} is to discover valid compositions of terminal instances of increasing sizes, where the size of a composition is defined as the number of terminal instances it contains. Size 1 compositions are simply the terminal instances (line \ref{alg:s1:s}--\ref{alg:s1:e}). To discover compositions of size $i>1$, the combination of any two compositions of sizes $j$ and $i-j\ (j<i)$ are considered (line \ref{alg:si:s}--\ref{alg:si:e}). A complete parse of the data sample is a composition of size $|X|$ with its root being the start symbol $S$ (line \ref{alg:ss}).

The time complexity of Algorithm \ref{alg:inf} is $O(|X|^2 c^2 |G|(|X|+|G|))$ where $c = \max_i |C_i|$ and $C_i$ is the set of valid compositions of size $i$ in the data sample $X$. 
In the worst case when all possible compositions of terminal instances from the data sample are valid, we have $c = \binom{|X|}{\left\lfloor|X|/2\right\rfloor}$ which is exponential in $|X|$. To make the algorithm tractable, we restrict the value of $c$ with the following assumption on the input data sample.
\theoremstyle{plain}
\newtheorem*{csa}{Composition Sparsity Assumption}
\begin{csa}
For any data sample $X$ and any positive integer $i \leq |X|$, the number of valid compositions of size $i$ in $X$ is polynomial in $|X|$.
\end{csa}
This assumption is reasonable in many scenarios. For text data, for a sentence of length $m$, a valid composition is a substring of the sentence and the number of substrings of size $i$ is $m-i+1$. For image data, if we restrict the compositions to be rectangular image patches (as in the hierarchical space tiling model \cite{Wang13hs}), then for an image of size $m = n \times n$ it is easy to show that the number of valid compositions of any specific size is no more than $n^3$.

\section{Logic Perspective of Stochastic AOGs}
\label{sec:logic}
In a stochastic AOG, And-rules model the relations between terminal and nonterminal instances and Or-rules model the uncertainty in the compositional structure. By combining these two types of rules, stochastic AOGs can be seen as probabilistic models of relational structures and are hence related to the field of statistical relational learning \cite{getoor2007introduction}. 
In this section, we manifest this connection by providing probabilistic logic interpretations of stochastic AOGs. By establishing this connection, we hope to facilitate the exchange of ideas and results between the two previously separated research areas.

\subsection{Interpretation as Probabilistic Logic}
\label{sec:logic:fol}
We first discuss an interpretation of stochastic context-free AOGs as a subset of first-order probabilistic logic with a possible-world semantics. The intuition is that we interpret terminal and nonterminal nodes of an AOG as unary relations, use binary relations to connect the instances of terminal and nonterminal nodes to form the parse tree, and use material implication to represent grammar rules.

We first describe the syntax of our logic interpretation of stochastic context-free AOGs.
There are two types of formulas in the logic: And-rules and Or-rules. Each And-rule takes the following form (for some $n \geq 2$).
\begin{multline*}
\forall x \, \exists y_1,y_2,\ldots,y_n, 
A(x) \rightarrow \bigwedge_{i=1}^n \left( B_i (y_i) \land R_i (x,y_i) \right) \\
\land R_\theta (\theta(x),\theta(y_1),\theta(y_2),\ldots,\theta(y_n))
\end{multline*}
The unary relation $A$ corresponds to the left-hand side And-node of an And-rule in the AOG; each unary relation $B_i$ corresponds to a child node of the And-rule. We require that for each unary relation $A$, there is at most one And-rule with $A(x)$ as the left-hand side. The binary relation $R_i$ is typically the \texttt{HasPart} relation between an object and one of its parts, but $R_i$ could also denote any other binary relation such as the \texttt{Agent} relation between an action and its initiator, or the \texttt{HasColor} relation between an object and its color. Note that these binary relations make explicit the nature of the composition represented by each And-rule of the AOG. $\theta$ is a function that maps an object to its parameter. $R_\theta$ is a relation that combines the parameter relation and parameter function in the And-rule of the AOG and is typically factorized to the conjunction of a set of binary relations. 

Each Or-rule takes the following form.
\[
\forall x, A(x) \rightarrow B(x) \; : p
\]
The unary relation $A$ corresponds to the left-hand side Or-node and $B$ to the child node of an Or-rule in the AOG;
$p$ is the conditional probability of $A(x) \rightarrow B(x)$ being true when the grounded left-hand side $A(x)$ is true. We require that for each true grounding of $A(x)$, among all the grounded Or-rules with $A(x)$ as the left-hand side, exactly one is true. This requirement can be represented by two additional sets of constraint rules. First, Or-rules with the same left-hand side are mutually exclusive, i.e., for any two Or-rules $\forall x, A(x) \rightarrow B_i(x)$ and $\forall x, A(x) \rightarrow B_j(x)$, we have $\forall x, A(x) \rightarrow B_i(x) \uparrow B_j(x)$ where $\uparrow$ is the Sheffer stroke. Second, given a true grounding of $A(x)$, the Or-rules with $A(x)$ as the left-hand side cannot be all false, i.e., $\forall x, A(x) \rightarrow \bigvee_i B_i(x)$ where $i$ ranges over all such Or-rules.
Further, to simplify inference and avoid potential inconsistency in the logic, we require that the right-hand side unary relation $B$ of an Or-rule cannot appear in the left-hand side of any Or-rule (i.e., the second requirement in the generalized Chomsky normal form of AOG described earlier).

We can divide the set of unary relations into two categories: those that appear in the left-hand side of rules (corresponding to the nonterminal nodes of the AOG) and those that do not (corresponding to the terminal nodes). The first category is further divided into two sub-categories depending on whether the unary relation appears in the left-hand side of And-rules or Or-rules (corresponding to the And-nodes and Or-nodes of the AOG respectively). We require these two sub-categories to be disjoint. There is also a unique unary relation $S$ that does not appear in the right-hand side of any rule, which corresponds to the start symbol of the AOG. 

Now we describe the semantics of the logic. The interpretation of all the logical and non-logical symbols follows that of first-order logic.
There are two types of objects in the universe of the logic: normal objects and parameters. There is a bijection between normal objects and parameters, and function $\theta$ maps a normal object to its corresponding parameter.
A possible world is represented by a pair $\langle X,L \rangle$ where $X$ is a set of objects and $L$ is a set of literals that are true. We require that there exists exactly one normal object $s \in X$ such that $S(s) \in L$. In order for all the deterministic formulas (i.e., all the And-rules and the two sets of constraint rules of all the Or-rules) to be satisfied, the possible world must contain a tree structure in which:
\begin{enumerate}
\item each node denotes an object in $X$ with the root node being $s$;
\item each edge denotes a binary relation defined in some And-rule;
\item for each leaf node $x$, there is exactly one terminal unary relation $T$ such that $T(x) \in L$;
\item for each non-leaf node $x$, there is exactly one And-node unary relation $A$ such that $A(x) \in L$, and for the child nodes $\{y_1,y_2,\ldots,y_n\}$ of $x$ in the tree, $\{B_i (y_i)\}_{i=1}^n \cup \{R_i (x,y_i)\}_{i=1}^n \cup \{R_\theta (\theta(x),\theta(y_1),\theta(y_2),\ldots,\theta(y_n))\} \subseteq L$ according to the And-rule associated with relation $A$;
\item for each node $x$, if for some Or-node unary relation $A$ we have $A(x) \in L$, then among all the Or-rules with $A$ as the left-hand side, there is exactly one Or-rule such that $B(x) \in L$ where $B$ is the right-hand side unary relation of the Or-rule, and for the rest of the Or-rules we have $\lnot B(x) \in L$.
\end{enumerate}
We enforce the following additional requirements to ensure that the possible world contains no more and no less than the tree structure:
\begin{enumerate}
\item No two nodes in the tree denote the same object.
\item $X$ and $L$ contain only the objects and relations specified above.
\end{enumerate}

The probability of a possible world $\langle X,L \rangle$ is defined as follows. Denote by $R^{Or}$ the set of Or-rules. For each Or-rule $r: \forall x, A(x) \rightarrow B(x)$, denote by $p_r$ the conditional probability associated with $r$ and define $g_r := \{x\in X | A(x)\in L\ \land\ B(x)\in L\}$. Then we have:
\[
	P(\langle X,L \rangle) = \prod_{r \in R^{Or}} {p_r}^{|g_r|}
\]

In this logic interpretation, parsing corresponds to the inference problem of identifying the most likely possible world in which the terminal relations and parameters of the leaf nodes of the tree structure match the atomic patterns in the input data sample. Computing the marginal probability of a data sample corresponds to computing the probability summation of the possible worlds that match the data sample. 


Our logic interpretation of stochastic context-free AOGs resembles tractable Markov logic (TML) \cite{domingos2012tractable,webb2013tractable} in many aspects, even though the two have very different motivations. 
Such similarity implies a deep connection between stochastic AOGs and TML and points to a potential research direction of investigating novel tractable statistical relational models by borrowing ideas from the stochastic grammar literature.
There are a few minor differences between stochastic AOGs and TML, e.g.,
TML does not distinguish between And-nodes and Or-nodes,
does not allow recursive rules, 
enforces that the right-hand side unary relation in each Or-rule is a sub-type of the left-hand side unary relation, and disallows a unary relation to appear in the right-hand side of more than one Or-rule. 

\subsection{Interpretation as a Stochastic Logic Program}
Stochastic logic programs (SLP) \cite{muggleton1996stochastic} are a type of statistical relational models that, like stochastic context-free AOGs, are a generalization of stochastic context-free grammars. They are essentially equivalent to two other representations, independent choice logic \cite{poole1993probabilistic} and PRISM \cite{sato2001parameter}.
Here we show how a stochastic context-free AOG can be represented by a pure normalized SLP \cite{cussens2001parameter}. Since several inference and learning algorithms have been developed for SLPs and PRISM, our reduction enables the application of these algorithms to stochastic AOGs.

In our SLP program, we have one SLP clause for each And-rule and each Or-rule in the AOG. The overall structure is similar to the probabilistic logic interpretation discussed in section \ref{sec:logic:fol}.
For each And-rule, the corresponding SLP clause takes the following form:
\begin{align*}
1.0: & \; a(X,P) \textrm{ :- }  b_1(X_1,P_1), b_2(X_2,P_2), \cdots, b_n(X_n,P_n), \\
	& \hspace{2.2em} append([X_1,\ldots,X_n],X), r_1(X,X_1), r_2(X,X_2), \\
	& \hspace{2.2em} \cdots, r_n(X,X_n), r_\theta(P,P_1,\ldots,P_n).
\end{align*}
The head $a(X,P)$ represents the left-hand side And-node of the And-rule, where $X$ represents the set of terminal instances generated from the And-node and $P$ is the parameters of the And-node. In the body of the clause, $b_i$ represents the $i$-th child node of the And-rule, $r_i$ represents the relation between the And-node and its $i$-th child node, $append(\ldots)$ states that the terminal instance set $X$ of the And-node is the union of the instance sets from all the child nodes, and $r_\theta$ represents a relation that combines the parameter relation and parameter function of the And-rule.
For relations $r_i$ and $r_\theta$, we need to have additional clauses to define them according to the type of data being modeled.

For each Or-rule in the AOG, if the right-hand side is a nonterminal, then we have:
\[
p: \; a(X,P) \textrm{ :- } b(X,P).
\]
where $p$ is the conditional probability associated with the Or-rule, $a$ and $b$ represent the left-hand and right-hand sides of the Or-rule respectively, whose arguments $X$ and $P$ have the same meaning as explained above.
If the right-hand side of the Or-rule is a terminal, then we have:
\[
p: \; a([t],[\ldots]).
\]
where $t$ is the right-hand side terminal node and the second argument represents the parameters of the terminal node. 

Finally, the goal of the program is
\[
\textrm{:-} \ s(X,P).
\]
which represents the start symbol of the AOG, whose arguments have the same meaning as explained above.


\section{Conclusion}
Stochastic And-Or grammars extend traditional stochastic grammars of language to model other types of data such as images and events.
We have provided a unified representation framework of stochastic AOGs that can be instantiated for different data types. We have shown that many existing grammar formalisms and probabilistic models in natural language processing, computer vision, and machine learning can all be seen as special cases of stochastic context-free AOGs. We have also proposed an inference algorithm for parsing data samples using stochastic context-free AOGs and shown that the algorithm is tractable under the composition sparsity assumption.
In the second part of the paper, we have provided interpretations of stochastic context-free AOGs as a subset of first-order probabilistic logic and stochastic logic programs. Our interpretations connect stochastic AOGs to the field of statistical relational learning and clarify their relation with a few existing statistical relational models.



\bibliography{aog}

\begin{thebibliography}{10}

\bibitem{fu1982syntactic}
King~Sun Fu.
\newblock {\em Syntactic pattern recognition and applications}, volume~4.
\newblock Prentice-Hall Englewood Cliffs, 1982.

\bibitem{Zhu06as}
Song-Chun Zhu and David Mumford.
\newblock A stochastic grammar of images.
\newblock {\em Found. Trends. Comput. Graph. Vis.}, 2(4):259--362, 2006.

\bibitem{Jin06ca}
Ya~Jin and Stuart Geman.
\newblock Context and hierarchy in a probabilistic image model.
\newblock In {\em CVPR}, 2006.

\bibitem{Ivanov00ro}
Yuri~A. Ivanov and Aaron~F. Bobick.
\newblock Recognition of visual activities and interactions by stochastic
  parsing.
\newblock {\em IEEE Transactions on Pattern Analysis and Machine Intelligence},
  22(8):852--872, 2000.

\bibitem{Ryoo06ro}
M.~S. Ryoo and J.~K. Aggarwal.
\newblock Recognition of composite human activities through context-free
  grammar based representation.
\newblock In {\em CVPR}, 2006.

\bibitem{Pei11pv}
Mingtao Pei, Yunde Jia, and Song-Chun Zhu.
\newblock Parsing video events with goal inference and intent prediction.
\newblock In {\em ICCV}, 2011.

\bibitem{Zhao11ip}
Yibiao Zhao and Song~Chun Zhu.
\newblock Image parsing with stochastic scene grammar.
\newblock In {\em NIPS}, 2011.

\bibitem{shieber1992constraint}
Stuart~M Shieber.
\newblock {\em Constraint-based grammar formalisms: parsing and type inference
  for natural and computer languages}.
\newblock MIT Press, 1992.

\bibitem{Poon11}
Hoifung Poon and Pedro Domingos.
\newblock Sum-product networks : A new deep architecture.
\newblock In {\em UAI}, 2011.

\bibitem{getoor2007introduction}
Lise Getoor and Ben Taskar.
\newblock {\em Introduction to statistical relational learning}.
\newblock MIT press, 2007.

\bibitem{domingos2012tractable}
Pedro Domingos and William~Austin Webb.
\newblock A tractable first-order probabilistic logic.
\newblock In {\em AAAI}, 2012.

\bibitem{muggleton1996stochastic}
Stephen Muggleton.
\newblock Stochastic logic programs.
\newblock {\em Advances in inductive logic programming}, 32:254--264, 1996.

\bibitem{Manning99book}
Christopher~D. Manning and Hinrich Sch\"{u}tze.
\newblock {\em Foundations of statistical natural language processing}.
\newblock MIT Press, Cambridge, MA, USA, 1999.

\bibitem{Wang13hs}
Shuo Wang, Yizhou Wang, and Song-Chun Zhu.
\newblock Hierarchical space tiling for scene modeling.
\newblock In {\em ACCV}. 2013.

\bibitem{rothrock2013integrating}
Brandon Rothrock, Seyoung Park, and Song-Chun Zhu.
\newblock Integrating grammar and segmentation for human pose estimation.
\newblock In {\em CVPR}, 2013.

\bibitem{Fire13uc}
A.~Fire and S.C. Zhu.
\newblock Using causal induction in humans to learn and infer causality from
  video.
\newblock In {\em 35th Annual Cognitive Science Conference (CogSci)}, 2013.

\bibitem{tu2014joint}
Kewei Tu, Meng Meng, Mun~Wai Lee, Tae~Eun Choe, and Song-Chun Zhu.
\newblock Joint video and text parsing for understanding events and answering
  queries.
\newblock {\em IEEE MultiMedia}, 2014.

\bibitem{tu2013unsupervised}
Kewei Tu, Maria Pavlovskaia, and Song-Chun Zhu.
\newblock Unsupervised structure learning of stochastic and-or grammars.
\newblock In {\em NIPS}, 2013.

\bibitem{Tu08}
Kewei Tu and Vasant Honavar.
\newblock Unsupervised learning of probabilistic context-free grammar using
  iterative biclustering.
\newblock In {\em ICGI}, 2008.

\bibitem{pollard1984generalized}
Carl Pollard.
\newblock Generalized context-free grammars, head grammars and natural
  language.
\newblock {\em Ph.D. diss., Stanford University}, 1984.

\bibitem{johnson1985parsing}
Mark Johnson.
\newblock Parsing with discontinuous constituents.
\newblock In {\em ACL}, 1985.

\bibitem{weir1988characterizing}
David~Jeremy Weir.
\newblock Characterizing mildly context-sensitive grammar formalisms.
\newblock {\em Ph.D. diss., University of Pennsylvania}, 1988.

\bibitem{Pollard1988information}
Carl Pollard and Ivan~A. Sag.
\newblock {\em Information-based Syntax and Semantics: Vol. 1: Fundamentals}.
\newblock Center for the Study of Language and Information, Stanford, CA, USA,
  1988.

\bibitem{felzenszwalb2008discriminatively}
Pedro Felzenszwalb, David McAllester, and Deva Ramanan.
\newblock A discriminatively trained, multiscale, deformable part model.
\newblock In {\em CVPR}, 2008.

\bibitem{yang2011articulated}
Yi~Yang and Deva Ramanan.
\newblock Articulated pose estimation with flexible mixtures-of-parts.
\newblock In {\em CVPR}, 2011.

\bibitem{darwiche2003differential}
Adnan Darwiche.
\newblock A differential approach to inference in bayesian networks.
\newblock {\em Journal of the ACM (JACM)}, 50(3):280--305, 2003.

\bibitem{dechter2007and}
Rina Dechter and Robert Mateescu.
\newblock And/or search spaces for graphical models.
\newblock {\em Artificial intelligence}, 171(2):73--106, 2007.

\bibitem{bach2001thin}
Francis~R Bach and Michael~I Jordan.
\newblock Thin junction trees.
\newblock In {\em NIPS}, 2001.

\bibitem{meila2001learning}
Marina Meila and Michael~I Jordan.
\newblock Learning with mixtures of trees.
\newblock {\em The Journal of Machine Learning Research}, 1:1--48, 2001.

\bibitem{choi2011learning}
Myung~Jin Choi, Vincent~YF Tan, Animashree Anandkumar, and Alan~S Willsky.
\newblock Learning latent tree graphical models.
\newblock {\em The Journal of Machine Learning Research}, 12:1771--1812, 2011.

\bibitem{lange2009cnf}
Martin Lange and Hans Lei{\ss}.
\newblock To {CNF} or not to {CNF}? an efficient yet presentable version of the
  {CYK} algorithm.
\newblock {\em Informatica Didactica}, 8:2008--2010, 2009.

\bibitem{webb2013tractable}
William~Austin Webb and Pedro Domingos.
\newblock Tractable probabilistic knowledge bases with existence uncertainty.
\newblock In {\em AAAI Workshop: Statistical Relational Artificial
  Intelligence}, 2013.

\bibitem{poole1993probabilistic}
David Poole.
\newblock Probabilistic horn abduction and bayesian networks.
\newblock {\em Artificial intelligence}, 64(1):81--129, 1993.

\bibitem{sato2001parameter}
Taisuke Sato and Yoshitaka Kameya.
\newblock Parameter learning of logic programs for symbolic-statistical
  modeling.
\newblock {\em Journal of Artificial Intelligence Research}, pages 391--454,
  2001.

\bibitem{cussens2001parameter}
James Cussens.
\newblock Parameter estimation in stochastic logic programs.
\newblock {\em Machine Learning}, 44(3):245--271, 2001.

\bibitem{peharz2015theoretical}
Robert Peharz, Sebastian Tschiatschek, Franz Pernkopf, and Pedro Domingos.
\newblock On theoretical properties of sum-product networks.
\newblock In {\em Proceedings of the Eighteenth International Conference on
  Artificial Intelligence and Statistics}, pages 744--752, 2015.

\end{thebibliography}
\bibliographystyle{unsrt}

\newpage
\normalsize
\appendix
\gdef\thesection{Appendix \Alph{section}}

\section{Related Models and Special Cases}\label{sm:sec:rmsc}
\subsection{Stochastic Context-Free Grammars}
\begin{definition}
	A stochastic context-free grammar (SCFG) is a 4-tuple $\langle \Sigma,N,S,R \rangle$:
	\begin{itemize}
		\item $\Sigma$ is a set of terminal symbols
		\item $N$ is a set of nonterminal symbols
		\item $S$ is a special nonterminal called the start symbol
		\item $R$ is a set of production rules, each of the form $A \rightarrow \alpha\ [p]$ where $A \in N$, $\alpha \in (\Sigma \bigcup N)^*$, and $p$ is the conditional probability $P(\alpha|A)$.
	\end{itemize}
\end{definition}

Any SCFG can be converted into \emph{And-Or normal form} as described in \cite{Tu08}. The conversion results in a linear increase in the grammar size.
\begin{definition}
	An SCFG is in And-Or normal form iff. its nonterminal symbols are divided into two disjoint subsets: And-symbols and Or-symbols, such that:
	\begin{itemize}
		\item each And-symbol appears on the left-hand side of exactly one production rule, and the right-hand side of the rule contains a sequence of two or more terminal or nonterminal symbols;
		\item each Or-symbol appears on the left-hand side of one or more rules, each of which has a single terminal or nonterminal symbol on the right-hand side.
	\end{itemize}
\end{definition}

\begin{proposition}
	Any SCFG can be converted into And-Or normal form with linear increase in size.
\end{proposition}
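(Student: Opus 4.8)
The plan is to give a direct syntactic construction that separates the two distinct roles a nonterminal plays in an SCFG---choosing \emph{which} production to apply (an ``Or'' choice) and concatenating the \emph{several} symbols on the right-hand side of a chosen production (an ``And'' composition)---and then to verify that the construction preserves the generated distribution while increasing the grammar size only by a constant factor. Throughout I would assume the input SCFG contains no $\epsilon$-productions, since the And-Or normal form has no right-hand side of length $0$ and $\epsilon$-elimination is an orthogonal preprocessing step that is not size-linear in general.

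First I would fix, for each original nonterminal $A$ with productions $A \rightarrow \alpha_1\,[p_1], \ldots, A \rightarrow \alpha_k\,[p_k]$, the convention that $A$ becomes an Or-symbol, and then treat each production by cases. If $|\alpha_i| = 1$, the production is already a legal Or-rule and I keep it unchanged. If $|\alpha_i| \geq 2$, I introduce a fresh And-symbol $A_i$ and replace the production by one Or-rule $A \rightarrow A_i\,[p_i]$ together with one deterministic And-rule $A_i \rightarrow \alpha_i\,[1]$. After processing every nonterminal, each And-symbol $A_i$ appears on the left-hand side of exactly one rule whose right-hand side has length $\geq 2$, and each Or-symbol $A$ appears on the left-hand side only of rules with a single symbol on the right, so the resulting grammar satisfies the definition of And-Or normal form.

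Next I would establish distributional equivalence. The key observation is that the two-step derivation $A \Rightarrow A_i \Rightarrow \alpha_i$ in the new grammar has probability $p_i \cdot 1 = p_i$, exactly the probability of the single step $A \Rightarrow \alpha_i$ in the original; and since each $A_i$ is fresh with a unique production, contracting each such $A_i$-node gives a bijection between the parse trees of the two grammars that preserves the yield. Because the probabilities $\{p_i\}$ attached to the rules of each Or-symbol $A$ are precisely the original $P(\alpha_i \mid A)$ and hence still sum to $1$, while every And-symbol carries probability $1$, the new grammar is a well-formed SCFG assigning every string the same probability as the original.

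Finally, for the size bound I would measure $|G|$ as the total number of symbol occurrences over all productions. Each production with $|\alpha_i| \geq 2$ is replaced by rules of total size $|\alpha_i| + O(1)$ and contributes one new nonterminal, whereas short productions are untouched; summing over all productions yields a new size of $|G| + O(|R|)$ and at most $|R|$ additional nonterminals, both linear in the size of the original grammar since $|R| \leq |G|$. I expect the construction and the size accounting to be routine; the part deserving the most care is the bijection argument for distributional equivalence, together with the preliminary justification that the no-$\epsilon$ assumption is harmless for the grammars of interest.
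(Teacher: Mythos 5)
Your construction is exactly the one the paper uses: each original nonterminal becomes an Or-symbol, and each production with a right-hand side of length at least two is split into an Or-rule $A \rightarrow B\,[p]$ and a deterministic And-rule $B \rightarrow \alpha\,[1]$ for a fresh And-symbol $B$. The paper states only the construction, so your added bijection argument for distributional equivalence, the explicit size accounting, and the caveat about $\epsilon$-productions are welcome elaborations rather than a different approach.
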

\begin{proof}
	We construct a SCFG in And-Or normal form as follows.
	For each production rule $A \rightarrow \alpha\ [p]$ with two or more symbols in $\alpha$, create an And-symbol $B$ and replace the production rule with two new rules: $A\rightarrow B\ [p]$ and $B\rightarrow\alpha\ [1.0]$. Regard all the nonterminals in the original SCFG as Or-symbols.
\end{proof}

\begin{proposition}
	Any SCFG can be represented by a stochastic context-free AOG with linear increase in size.
\end{proposition}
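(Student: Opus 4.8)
The plan is to chain the previous proposition with a direct structural translation. First I would invoke the preceding proposition to convert the given SCFG into And-Or normal form, incurring only a linear increase in size; it then suffices to exhibit a stochastic context-free AOG whose size is linear in that of the And-Or normal form grammar. The key observation is that the And-Or normal form already partitions the nonterminals into And-symbols and Or-symbols whose rule shapes match exactly the And-rules and Or-rules of an AOG, so the translation will be essentially node-for-node and rule-for-rule.

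Second, I would set up the construction explicitly, following the informal description in Section \ref{sec:aog:related}. Map each terminal symbol to a terminal node, each And-symbol to an And-node, each Or-symbol to an Or-node, and the start symbol to the AOG start symbol. Each node's parameter $\theta_x$ is the pair of start/end positions of the substring it spans in the complete sentence, so $m_\theta$ is a constant. For an And-rule $A \rightarrow \{x_1,\ldots,x_n\}$ I would define the parameter relation $t$ to assert that the substrings are consecutive and non-overlapping (the end position of $x_i$ coincides with the start position of $x_{i+1}$ for each $i$), and the parameter function $f$ to return the pair (start of $x_1$, end of $x_n$); both are clearly computable in time polynomial, in fact linear, in $n$ and $m_\theta$, as required. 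For each Or-rule $O \rightarrow x\ [p]$ I would carry over the conditional probability $p$ and note that the constraint $\theta_O = \theta_x$ is automatically consistent, since the Or-symbol and its right-hand side span the same substring.

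Third, I would establish correctness by exhibiting a probability-preserving bijection between SCFG parse trees of a sentence and the compositional structures the AOG generates over the terminal positions of that sentence. The forward direction reads off the parameters from the spans in the parse tree and checks that the adjacency conditions encoded by $t$ are exactly the concatenation constraints of the And-Or normal form; the backward direction reconstructs a parse tree from a compositional structure. Because the probability of a compositional structure is the product of the Or-rule probabilities, and these are precisely the SCFG conditional probabilities (the And-rules carry probability $1.0$ in And-Or normal form), the two probabilities coincide. The size bound is then immediate: the AOG has the same nodes and rules as the And-Or normal form grammar together with fixed-size parameter specifications, hence it is linear in the original SCFG.

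I expect the main obstacle to be the correctness argument rather than the construction: one must verify carefully that the parameter relation $t$ and parameter function $f$ together enforce exactly string concatenation, neither admitting spurious compositions (gaps or overlaps between adjacent substrings) nor excluding legitimate ones, so that the bijection between parse trees and compositional structures is tight. Everything else, including the size bound and the probability preservation, should follow by routine bookkeeping.
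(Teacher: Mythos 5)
Your proposal is correct and follows essentially the same route as the paper's proof: convert to And-Or normal form via the preceding proposition, then map And-/Or-symbols to And-/Or-nodes with start/end-position parameters, an adjacency parameter relation, and a concatenation parameter function. The only difference is that you spell out the probability-preserving bijection between parse trees and compositional structures, which the paper leaves as ``easy to verify.''
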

\begin{proof}
	We first convert the SCFG into And-Or normal form. We then construct an equivalent stochastic context-free AOG $\langle \Sigma,N,S,\theta,R \rangle$:
	\begin{itemize}
		\item $\Sigma$ is the set of terminal symbols in the SCFG.
		\item $N$ is the set of nonterminal symbols in the SCFG, with a correspondence from And-symbols to And-nodes and from Or-symbols to Or-nodes.
		\item $S$ is the start symbol of the SCFG.
		\item $\theta$ maps a substring represented by a terminal or nonterminal symbol to its start/end positions in the complete sentence.
		\item $R$ is constructed from the set of production rules in the And-Or normal form SCFG; each rule headed by an And-symbol becomes an And-rule, with its parameter relation specifies that the substrings represented by the child nodes must be adjacent (by checking their start/end positions) and its parameter function outputs the start/end positions of the concatenated string represented by the parent And-node (i.e., the start position of the leftmost substring and the end position of the rightmost substring); each rule headed by an Or-symbol becomes an Or-rule with the same conditional probability.
	\end{itemize}
	It is easy to verify that the size of the stochastic context-free AOG is linear in the size of the original SCFG.
\end{proof}

\subsection{Linear Context-Free Rewriting Systems}
Linear context-free rewriting systems (LCFRS) \cite{weir1988characterizing} are a class of mildly context-sensitive grammars, which subsume as special cases a few other grammar formalisms \cite{pollard1984generalized,johnson1985parsing}.
\begin{definition}
	A linear context-free rewriting system is a 4-tuple $\langle \Sigma,N,S,R \rangle$:
	\begin{itemize}
		\item $\Sigma$ is a set of terminal symbols
		\item $N$ is a set of nonterminal symbols
		\item $S$ is a special nonterminal called the start symbol
		\item $R$ is a set of production rules, each of the form $p: A[g(\beta_1,\ldots,\beta_r)] \rightarrow B_1[\beta_1], \ldots, B_r[\beta_r]$ such that $p$ is the conditional probability of the rule given $A$, $A,B_1,\ldots,B_r \in N$, $\beta_i\in V^{\phi(B_i)}$ (for $i=1,\ldots,r$) where $\phi: N\rightarrow\mathbb{N}$ specifies the \emph{fan-out} of a nonterminal symbol and $V$ is a set of variables, and $g: V^{\phi(B_1)}\times\ldots\times V^{\phi(B_r)}\rightarrow((V\bigcup \Sigma)^+)^{\phi(A)}$ is a \emph{composition function} that is \emph{linear} and \emph{regular}, i.e., in the equation
		\[
		g(\beta_1,\ldots,\beta_r) = \langle t_1, \ldots, t_{\phi(A)} \rangle
		\]
		each variable in $V$ appears at most once on each side of the equation and the two sides of the equation contain exactly the same set of variables.
	\end{itemize}
\end{definition}

We can define And-Or normal form of LCFRS in a similar way as for SCFG.
\begin{definition}
	An LCFRS is in And-Or normal form iff. its nonterminal symbols are divided into two disjoint subsets: And-symbols and Or-symbols, such that:
	\begin{itemize}
		\item each And-symbol appears on the left-hand side of exactly one production rule, and the number of nonterminal symbols on right-hand side of the rule plus the number of terminals inserted by the composition function is larger than or equal to two;
		\item each Or-symbol appears on the left-hand side of one or more rules, in each of which the number of nonterminal symbols on right-hand side plus the number of terminals inserted by the composition function is one.
	\end{itemize}
\end{definition}

\begin{proposition}
	Any LCFRS can be converted into And-Or normal form with linear increase in size.
\end{proposition}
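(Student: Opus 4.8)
The plan is to mimic the construction used for SCFG, separating each production into a \emph{choice} part and a \emph{composition} part. First I would regard every nonterminal of the original LCFRS as an Or-symbol. Then, for each production rule $\rho$ of the form $p: A[g(\beta_1,\ldots,\beta_r)] \rightarrow B_1[\beta_1],\ldots,B_r[\beta_r]$ whose \emph{content} (the number of right-hand side nonterminals plus the number of terminals inserted by $g$) is at least two, I would introduce a fresh And-symbol $A_\rho$ with fan-out $\phi(A_\rho) := \phi(A)$ and replace $\rho$ by two rules: an Or-rule $p: A[\langle x_1,\ldots,x_{\phi(A)}\rangle] \rightarrow A_\rho[\langle x_1,\ldots,x_{\phi(A)}\rangle]$ whose composition function is the identity on the tuple of strings, and an And-rule $1.0: A_\rho[g(\beta_1,\ldots,\beta_r)] \rightarrow B_1[\beta_1],\ldots,B_r[\beta_r]$ carrying the original composition function $g$. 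Every production whose content equals one is already in Or-form and is kept unchanged.

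Next I would verify that the result meets the two clauses of the And-Or normal form. Each introduced And-symbol $A_\rho$ is, by construction, the head of exactly one rule, namely its And-rule, and that rule has content at least two since it reuses the original $g$ and right-hand side. Each Or-symbol is an original nonterminal, and every rule it now heads has content one: the new identity Or-rules contribute a single nonterminal and no inserted terminal, while the untouched rules already had content one by the case split. I would also check that the identity map is a linear, regular composition function (each variable occurs exactly once on each side of its defining equation), so the new Or-rules are legitimate LCFRS productions.

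Then I would argue language equivalence and the size bound. Equivalence of the generated weighted tuple languages follows because firing the identity Or-rule and then the matching And-rule derives for $A$ exactly the same string tuple as the original rule $\rho$, with preserved probability $p \cdot 1.0 = p$; conversely, every derivation of the new grammar factors through such a pair. For the size bound, each rule of content at least two is replaced by two rules of comparable size plus one fresh symbol, and all remaining rules are untouched, so the total size grows by at most a constant factor, giving the claimed linear increase.

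I expect the main obstacle to be bookkeeping rather than anything conceptual: confirming that the fan-outs line up so that both new rules are well-formed --- in particular that setting $\phi(A_\rho) = \phi(A)$ makes the identity a map of the required type $V^{\phi(A_\rho)} \to ((V \cup \Sigma)^+)^{\phi(A)}$ --- and handling the edge cases of the content-based case split, such as a production with a single nonterminal on the right that still inserts terminals, or one that inserts a single terminal with no nonterminals, so that each production is assigned to the correct And or Or category.
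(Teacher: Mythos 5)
Your proposal is correct and follows essentially the same route as the paper, which simply performs the SCFG-style splitting (one fresh And-symbol per production of content at least two, linked to the original head by a probability-preserving unit Or-rule with the identity composition function). The paper states this in one line by reference to the SCFG case; you supply the additional LCFRS-specific bookkeeping (fan-out of the new symbol, linearity and regularity of the identity map), which is a welcome elaboration but not a different argument.
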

\begin{proof}
	The conversion can be done in the same way as for SCFG.
\end{proof}

\begin{proposition}
	Any LCFRS can be represented by a stochastic context-free AOG with linear increase in size.
\end{proposition}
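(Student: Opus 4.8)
The plan is to mirror the SCFG construction given above, generalizing each node from representing a single string to representing a tuple of strings, so that the linear regular composition function attached to each LCFRS rule is faithfully encoded by the parameter relation and parameter function of the corresponding And-rule. First I would invoke the preceding proposition to convert the given LCFRS into And-Or normal form with only a linear increase in size; after this step each And-symbol heads exactly one rule (whose right-hand side contributes at least two nonterminals or inserted terminals) and each Or-symbol heads rules that each contribute a single nonterminal or inserted terminal.

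Second, I would build the AOG $\langle \Sigma, N, S, \theta, R \rangle$ directly from the normal-form LCFRS. The terminals, nonterminals (with And-symbols mapped to And-nodes and Or-symbols to Or-nodes), and start symbol carry over unchanged. The parameter $\theta_x$ of a node standing for a nonterminal $A$ of fan-out $\phi(A)$ is the list of $\phi(A)$ start/end position pairs marking where the $\phi(A)$ components of the tuple sit in the complete sentence (terminals, including those inserted by composition functions, have fan-out one and hence a single position pair). For an And-rule coming from an LCFRS rule $A[g(\beta_1,\ldots,\beta_r)] \rightarrow B_1[\beta_1],\ldots,B_r[\beta_r]$, the child nodes of the And-node are $B_1,\ldots,B_r$ together with one terminal child node per terminal inserted by $g$. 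The parameter relation $t$ reads off, from $g(\beta_1,\ldots,\beta_r)=\langle t_1,\ldots,t_{\phi(A)}\rangle$, the required adjacency constraints: within each output string $t_j$ consecutive elements (each being a variable naming some component of some $B_i$, or an inserted terminal) must be span-adjacent, i.e.\ the end position of one equals the start position of the next, and $t$ returns true exactly when all these constraints hold. The parameter function $f$ then outputs the parent list whose $j$-th pair is the start position of the first element of $t_j$ and the end position of its last element. Each Or-rule is carried over with its original conditional probability.

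Third, I would verify the three things the claim needs. For polynomial-time computability: because $g$ is linear and regular, every variable occurs exactly once on each side, so $t$ and $f$ each scan the grammar-determined output tuple once, and the work is polynomial in the number of children $n$ and the maximal parameter size $m_\theta$. For size: the construction adds only the bounded per-rule bookkeeping already introduced by the And-Or normal form, so the AOG size is linear in the size of the original LCFRS. For equivalence: I would argue that LCFRS derivations of a sentence and AOG compositional structures over the same terminal instances are in bijection and carry equal probability, since the And-rule parameter relation accepts precisely the span assignments realized by applying $g$, and the Or-rule probabilities are preserved.

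The hard part will be verifying equivalence when the fan-out exceeds one: a single child nonterminal then contributes several discontiguous spans that $g$ may permute and interleave with the spans of its siblings and with inserted terminals. I would need to confirm that the linearity and regularity of $g$ force the encoded adjacency constraints to pin down a unique, non-overlapping, gap-free arrangement of all child spans into the parent's $\phi(A)$ components, so that the parameter relation neither admits spurious compositions nor rejects legitimate ones. The requirement that the data sample consist of \emph{non-duplicate} terminal instances, enforced separately during parsing, supplies the remaining guarantee that the tuple components partition the occupied positions exactly as the LCFRS derivation does.
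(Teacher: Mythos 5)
Your proposal is correct and follows essentially the same route as the paper's proof: convert to And-Or normal form, map And-/Or-symbols to And-/Or-nodes, let $\theta$ carry the list of start/end position pairs for the tuple components, and encode the linear regular composition function $g$ via adjacency constraints in the parameter relation and span concatenation in the parameter function. Your added care about the fan-out~$>1$ case and the non-overlap guarantee is a more explicit elaboration of what the paper states tersely, not a different argument.
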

\begin{proof}
	We first convert the LCFRS into And-Or normal form.
	We then construct an equivalent stochastic context-free AOG $\langle \Sigma,N,S,\theta,R \rangle$:
	\begin{itemize}
		\item $\Sigma$ is the set of terminal symbols in the LCFRS.
		\item $N$ is the set of nonterminal symbols in the LCFRS, with a correspondence from And-symbols to And-nodes and from Or-symbols to Or-nodes.
		\item $S$ is the start symbol of the LCFRS.
		\item $\theta$ maps a list of substrings represented by a terminal or nonterminal symbol to a list of start/end positions of these substrings in the complete sentence.
		\item $R$ is constructed from the set of production rules in the And-Or normal form LCFRS:
		\begin{itemize}
			\item Each rule headed by an And-symbol becomes an And-rule, whose right-hand side includes all the right-hand side nonterminal symbols of the original rule as well as all the terminal symbols added by the composition function. Note that each of the substrings represented by the And-symbol is formed by the composition function by concatenating terminals and/or substrings represented by the nonterminal symbols on the right-hand side of the rule. The parameter relation enforces that these component substrings are adjacent (by checking their start/end positions), and the parameter function outputs the start/end positions of the concatenated strings.
			\item Each rule headed by an Or-symbol becomes an Or-rule with the same conditional probability, whose right-hand side contains the single right-hand side nonterminal symbol of the original rule or the single terminal symbol from the composition function.
		\end{itemize}
	\end{itemize}
	It is easy to verify that the size of the stochastic context-free AOG is linear in the size of the original LCFRS.
\end{proof}

\subsection{Constraint-based Grammar Formalisms}
Constraint-based grammar formalisms \cite{shieber1992constraint} associate feature structures to nonterminals and use them to specify constraints in the grammar rules.
\begin{definition}
	A \emph{feature structure} is a set of attribute-value pairs. The value of an attribute is either an atomic symbol or another feature structure.
	A \emph{feature path} in a feature structure is a list of attributes that leads to a particular value.
\end{definition}
Below is an example feature structure, and $\langle$Agreement Number$\rangle$ is a feature path leading to the atomic symbol value \textit{singular}.
\[\left[\begin{array}{ll}
\textrm{Category} & \textit{NP} \\
\textrm{Agreement} & \left[
\begin{array}{ll}
\textrm{Number} & \textit{singular} \\
\textrm{Person} & \textit{third}
\end{array}
\right]
\end{array}\right]\]

\begin{definition}
	A constraint-based grammar formalism is a 4-tuple $\langle \Sigma,N,S,R \rangle$:
	\begin{itemize}
		\item $\Sigma$ is a set of terminal symbols
		\item $N$ is a set of nonterminal symbols
		\item $S$ is a special nonterminal called the start symbol
		\item $R$ is a set of production rules, each of the form $p: A \rightarrow \alpha\ \{C\}$ where $p$ is the conditional probability $P(\alpha|A)$, $A \in N$, $\alpha \in (\Sigma \bigcup N)^*$, and $C$ is a set of \emph{feature constraints};
		each nonterminal symbol in the rule is associated with a feature structure;
		each feature constraint takes the form of either ``$\langle X$ feature-path$\rangle =$ atomic-value'' or ``$\langle X$ feature-path$\rangle = \langle Y$ feature-path$\rangle$'', where $X,Y$ are nonterminal symbols in the rule.
	\end{itemize}
\end{definition}

\begin{proposition}
	Any constraint-based grammar formalism can be represented with linear increase in size by a generalization of stochastic context-free AOG that allows an And-rule to have only one symbol on the right-hand side.
\end{proposition}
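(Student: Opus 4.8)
The plan is to follow the same two-stage template used for SCFG and LCFRS: first convert the constraint-based grammar into an And-Or normal form, and then construct an equivalent AOG by specifying how feature structures and feature constraints are encoded as node parameters and parameter relations/functions. The only genuinely new ingredient is that feature constraints must be carried through the construction, and this is precisely what forces the generalization to unary And-rules.

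First I would define an And-Or normal form for constraint-based grammars in analogy with the earlier definitions. For each production rule $p: A \rightarrow \alpha\ \{C\}$, I would introduce a fresh And-symbol $B$ and replace the rule by an Or-rule $A \rightarrow B\ [p]$ (a pure choice carrying no constraints) together with an And-rule $B \rightarrow \alpha\ \{C\}\ [1.0]$ that carries the composition and all of the feature constraints. Regarding every original nonterminal as an Or-symbol, this yields a normal form in which each Or-symbol heads one or more single-symbol constraint-free rules and each And-symbol heads exactly one rule. Crucially, when $\alpha$ consists of a single symbol but $C$ is nonempty, the resulting And-rule is unary, which is exactly why the standard AOG definition must be relaxed here: an ordinary Or-rule forces $\theta_O = \theta_x$ and therefore cannot impose any feature constraints, whereas a unary And-rule can apply a nontrivial parameter relation and parameter function.

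Next I would build the AOG $\langle \Sigma, N, S, \theta, R\rangle$. The terminals, the nonterminals (with And-symbols mapped to And-nodes and Or-symbols to Or-nodes), and the start symbol all carry over directly. I would let $\theta$ map each symbol occurrence to a parameter that is the pair consisting of the start/end positions of the substring it spans, exactly as in the SCFG construction, together with the finite feature structure associated with that symbol. Each Or-rule then becomes an AOG Or-rule with the same probability and $\theta_O = \theta_x$. Each And-rule becomes an AOG And-rule whose parameter relation conjoins the adjacency checks on the positional components (enforcing the context-free backbone) with the feature constraints $C$: a constraint of the form ``$\langle X$ path$\rangle =$ value'' is a unary test on one child's feature component, while ``$\langle X$ path$\rangle = \langle Y$ path$\rangle$'' is a binary test between two components, so the relation remains a conjunction of at most binary relations as required. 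The parameter function would output the concatenated start/end positions together with the parent's feature structure obtained by unifying the constraints, so that the parameter recorded at $B$ is exactly the feature structure that constraint resolution assigns to $A$.

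The main obstacle will be faithfully reproducing the unification semantics of constraint-based grammars inside the \emph{polynomial-time} parameter relation and parameter function, while respecting the requirements that $\theta$ have bounded size $m_\theta$ and that both maps be computable in time polynomial in $n$ and $m_\theta$. The observations that discharge this are that the feature structures appearing in any rule are finite and fixed in advance, that checking the two permitted forms of feature constraint and performing unification over such structures are standard polynomial-time operations, and that I must partition the constraints correctly into those constraining the children (checked by the parameter relation) and those determining the parent (computed by the parameter function). Finally, since each original rule is replaced by a constant number of new symbols and rules and each parameter encodes only a feature structure already present in the grammar, the resulting AOG is linear in the size of the original grammar, completing the verification.
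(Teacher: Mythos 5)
Your proposal is correct and follows essentially the same construction as the paper: for each rule $p: A \rightarrow \alpha\ \{C\}$ you introduce a fresh And-node $B$, split the rule into an Or-rule $A \rightarrow B$ carrying the probability and a (possibly unary) And-rule $B \rightarrow \alpha$ carrying the constraints, encode feature structures alongside start/end positions in the node parameters, and partition $C$ into constraints on the children (parameter relation) versus constraints determining the parent's feature structure (parameter function) --- exactly as in the paper's proof. Your explicit justification of why unary And-rules are unavoidable (an Or-rule forces $\theta_O = \theta_x$ and so cannot impose nontrivial constraints) is a useful addition the paper leaves implicit.
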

\begin{proof}
	We construct an equivalent stochastic context-free AOG $\langle \Sigma,N,S,\theta,R \rangle$ in which we allow an And-rule to have only one symbol on the right-hand side:
	\begin{itemize}
		\item $\Sigma$ is the set of terminal symbols in the constraint-based grammar formalism.
		\item For $N$, all the nonterminal symbols of the constraint-based grammar formalism become Or-nodes, and for each production rule we create an And-node.
		\item $S$ is the start symbol of the constraint-based grammar formalism.
		\item $\theta$ maps a word represented by a terminal symbol to the start/end positions of the word in the complete sentence and maps a substring represented by a nonterminal symbol to a feature structure in addition to the start/end positions of the substring.
		\item $R$ is constructed as follows. For each rule $p: A \rightarrow \alpha\ \{C\}$ in the constraint-based grammar formalism, create one Or-rule $p: A \rightarrow B$ and one And-rule $B \rightarrow \alpha$ where $B$ is a new And-node. Suppose $C'$ is a copy of $C$ with all the appearance of $A$ changed to $B$. Then the parameter relation of the And-rule is the conjunction of the constraints in $C'$ that does not involve $B$ plus the constraint that the substrings represented by the child nodes must be adjacent (by checking their start/end positions); the parameter function outputs the start/end positions of the concatenated string as well as a new feature structure constructed according to the constraints in $C'$ that involve $B$.
	\end{itemize}
	It is easy to verify that the size of the stochastic context-free AOG is linear in the size of the original constraint-based grammar formalism.
\end{proof}

\subsection{Sum-Product Networks}
Sum-product networks (SPN) \cite{Poon11} are a new type of deep probabilistic models that can be more compact than traditional graphical models.
\begin{definition}
	A sum-product network over random variables $x_1, x_2, \ldots, x_d$ is a rooted directed acyclic graph. Each leaf node is an indicator $x_i$ or $\bar{x}_i$. Each non-leaf node is either a sum node or a product node. A sum node computes a weighted sum of its child nodes. A product node computes the product of its child nodes. The value of an SPN is the value of its root node. The scope of a node is the set of variables appearing in its descendant leaf nodes. For an SPN to correctly compute the probability of all evidence, the children of any sum node must have identical scopes and the children of any product node cannot contain conflicting descendant leaf nodes (i.e., $x_i$ in one child and $\bar{x}_i$ in another).
\end{definition}
\begin{definition}
	A decomposable SPN is an SPN in which the children of any product node have disjoint scopes.
\end{definition}
It has been shown that any SPN can be converted into a decomposable SPN with polynomial increase in size \cite{peharz2015theoretical}.

\begin{proposition}
	Any decomposable SPN can be represented by a stochastic context-free AOG with linear increase in size.
\end{proposition}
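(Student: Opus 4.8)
The plan is to build the AOG node-by-node from a normalized decomposable SPN, mapping sum nodes to Or-nodes, product nodes to And-nodes, and indicator leaves to terminal nodes, and then to verify that the two objects define the same distribution. First I would preprocess the SPN so that the outgoing weights of every sum node are locally normalized to sum to one; this is a known transformation that preserves the distribution computed by a complete and decomposable SPN and leaves its size unchanged (as in \cite{peharz2015theoretical}). I would also clear away a few degenerate cases at constant cost: a product node with a single child is spliced out (so that every surviving product node has $n \geq 2$ children, matching the And-rule requirement), and if the root is a product node I prepend a trivial sum node with a single weight-$1$ edge so that the root is a sum node and the start symbol becomes an Or-node.

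The construction itself is direct. For each indicator leaf $x_i$ or $\bar x_i$ I create a terminal node; for each sum node an Or-node; for each product node an And-node; and the root corresponds to the start symbol $S$. All node parameters are set to null, every parameter relation $t$ returns \emph{True}, and every parameter function $f$ returns null, so the And-rules carry no spatial content and are deterministic. A product node with children $c_1,\ldots,c_n$ becomes the single And-rule $A \rightarrow \{c_1,\ldots,c_n\}$, and a sum node $O$ with weighted children $(c_1,w_1),\ldots,(c_n,w_n)$ becomes the Or-rules $O \rightarrow c_k$ with conditional probabilities $w_k$, which sum to one by the normalization step. Because we target a general context-free AOG rather than its Chomsky normal form, chains of sum or product nodes (an Or-node feeding an Or-node, or an And-node feeding an And-node) translate directly, and the node sharing of the DAG is simply inherited by the AOG, which the framework permits. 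Each SPN node yields exactly one AOG node and one rule (or one rule set), so the size increase is linear.

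The crux is distributional equivalence, and this is where \emph{decomposability} does the work. A parse tree of the AOG is exactly an induced tree of the SPN: at every And-node (product node) all children are expanded, at every Or-node (sum node) exactly one child is chosen, and the parse probability, being the product of the chosen Or-rule probabilities, equals the product of the selected sum-edge weights. Completeness (children of a sum node share a scope) guarantees that every induced tree covers the full variable scope, so its leaf set encodes a complete assignment $x$; decomposability (children of a product node have disjoint scopes) is precisely the non-overlapping, disjoint-terminal-instance requirement of an And-rule and ensures each variable contributes exactly one indicator leaf. Consequently the marginal probability the AOG assigns to the data sample encoding $x$, computed by summing parse probabilities as in the sum-variant of Algorithm \ref{alg:inf}, equals the sum over induced trees consistent with $x$ of their weight products, which is exactly the probability the normalized SPN assigns to $x$. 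Acyclicity of the DAG further ensures the resulting AOG is non-recursive and thus defines a proper distribution over finitely many parses.

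I expect the main obstacle to be not the construction but this final equivalence argument: carefully matching the SPN's bottom-up sum--product evaluation against the AOG's sum-over-parses marginal, and pinning down that it is decomposability, rather than mere consistency, that licenses treating each product node as an And-rule whose sub-patterns are guaranteed non-overlapping. The normalization and degenerate-case bookkeeping are routine, but they must be discharged to make the Or-rule probabilities legitimate conditional distributions and to keep every And-rule within the $n \geq 2$ convention.
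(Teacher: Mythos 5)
Your construction is exactly the paper's: sum nodes become Or-nodes, product nodes become And-nodes, indicators become terminals, all parameters are null with trivially true relations, and Or-rule probabilities are the normalized sum-edge weights (citing the same normalization result). You supply more detail than the paper on the distributional-equivalence step (the induced-tree correspondence) and on degenerate cases such as single-child product nodes, but the approach is essentially identical and correct.
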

\begin{proof}
	We construct an equivalent stochastic context-free AOG $\langle \Sigma,N,S,\theta,R \rangle$:
	\begin{itemize}
		\item $\Sigma$ is the set of leaf nodes (indicators) in the SPN.
		\item $N$ is the set of non-leaf nodes in the SPN, with a correspondence from product nodes to And-nodes and from sum nodes to Or-nodes.
		\item $S$ is the root node of the SPN.
		\item $\theta$ maps any node instance to null (i.e., we set all the instance parameters to null).
		\item $R$ is constructed as follows: for each product node in the SPN, create an And-rule with the product node as the left-hand side and the set of child nodes as the right-hand side, let the parameter relation be always true, and let the parameter function always return null; for each child node of each sum node in the SPN, create an Or-rule with the sum node as the left-hand side, the child node as the right-hand side, and the normalized weight of the child node as the conditional probability.
	\end{itemize}
	As shown in \cite{peharz2015theoretical}, normalization of the child node weights of the sum nodes do not change the distribution modeled by the SPN. Therefore, for any assignment to the random variables, the marginal probability computed by the constructed stochastic context-free AOG and the probability computed by the original SPN are always equivalent. It is easy to verify that the size of the stochastic context-free AOG is linear in the size of the original SPN.
\end{proof}

Note that although SPNs are also general-purpose probabilistic models that can be used in modeling many types of data, stochastic AOGs go beyond SPNs in a few important aspects. Specifically, stochastic AOGs can simultaneously model data samples of different sizes, explicitly model relations, reuse grammar rules over different scopes, and allow recursive rules. These differences make stochastic AOGs better suited for certain domains and applications, e.g., to model recursion in language and translation invariance in computer vision.

\section{Computational Complexity of Inference}\label{sm:sec:np}
We prove that the parsing problem of stochastic AOGs (i.e., given a data sample consisting of only terminal nodes, finding its most likely parse) is NP-hard.
\begin{theorem}
	The parsing problem of stochastic AOGs is NP-hard.
\end{theorem}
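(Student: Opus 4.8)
The plan is to prove NP-hardness by a polynomial-time reduction from 3-SAT, exploiting the fact that the definition of a stochastic context-free AOG permits \emph{arbitrary} polynomial-time parameter relations and parameter functions inside And-rules. Intuitively, the Or-rules supply nondeterministic ``guessing'' power (the parser may choose which rule to apply), the parameter functions let us record those guesses into node parameters, and a single parameter relation near the top lets us ``verify'' in polynomial time that the recorded guess is a satisfying assignment. The combinatorial explosion that makes parsing hard is exactly the $2^n$ distinct partial assignments that may label a node, which is also precisely the regime in which the Composition Sparsity Assumption fails.

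Concretely, given a 3-SAT formula $\phi$ over variables $v_1,\dots,v_n$, I would build a data sample $X=\{x_0,x_1,\dots,x_n\}$, where each $x_i$ ($i\geq 1$) is an instance of a distinct terminal $t_i$ whose parameter encodes the index $i$, and $x_0$ is a sentinel instance of a terminal $t_0$. The grammar is a left-branching ``spine'': an Or-node $S_k$ chooses between two And-nodes $A_k^{+}$ and $A_k^{-}$, each carrying the single And-rule $A_k^{b}\to S_{k-1}\,T_k$ (with $T_k\to t_k$ routing the $k$-th terminal). The parameter function of $A_k^{b}$ appends the bit $b$ (the truth value of $v_k$) to the partial assignment stored in the parameter of $S_{k-1}$, while its parameter relation checks that $T_k$ indeed carries index $k$ and that $\theta_{S_{k-1}}$ is a well-formed $(k-1)$-bit assignment. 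A constant-size base case (an Or-node $S_2$ over the four And-nodes $A_2^{b_1 b_2}\to T_1\,T_2$) handles the first two variables. Finally the start symbol reaches an And-rule $S_{\mathrm{acc}}\to S_n\,T_0$ whose parameter relation returns true exactly when the $n$-bit assignment stored in $\theta_{S_n}$ satisfies every clause of $\phi$; evaluating $\phi$ on an assignment runs in time polynomial in $n$ and $|\phi|$, so this is an admissible parameter relation.

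The correctness argument then has two directions. For soundness, any choice of truth values selects a sequence of And-nodes $A_k^{b_k}$ along the spine that records the corresponding assignment, and a valid parse exists iff the top parameter relation accepts, i.e.\ iff the assignment satisfies $\phi$; conversely, any valid parse of $X$ determines such an assignment. Assigning arbitrary positive probabilities to all Or-rules, the most likely parse has probability $p^{*}>0$ iff a valid parse exists iff $\phi$ is satisfiable, so computing $p^{*}$ (and a fortiori recovering the most likely parse) would solve 3-SAT. Since the grammar has $O(n)$ nodes and rules and all relations and functions are polynomial-time with $m_\theta=O(n)$, the reduction is polynomial.

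The main obstacle is to guarantee \emph{rigidity}: I must ensure that the only valid parse trees of $X$ are the intended spines, so that parses are in bijection with truth assignments and no ``cheating'' parse can satisfy the grammar without corresponding to an assignment. This is exactly the role of the index checks in the parameter relations — by using distinct terminals $t_i$ and forcing $A_k^{b}$ to combine precisely the prefix $S_{k-1}$ (covering $t_1,\dots,t_{k-1}$) with $T_k$ (covering $t_k$), the shape and order of the tree is essentially determined, and the only remaining freedom is the guessed bits $b_1,\dots,b_n$. Carefully verifying that these constraints admit no unintended derivations, and that every parameter function and relation stays polynomial in $m_\theta$, is the part of the proof that will need the most attention.
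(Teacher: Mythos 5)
Your reduction is correct in outline, but it takes a genuinely different route from the paper's. The paper also reduces from 3SAT, yet it deliberately sets every parameter to null and every parameter relation to the constant \emph{true}: the hardness comes entirely from the And-Or structure. There, each variable is an Or-choice between two literal And-nodes, each clause $c_j$ is a shared Or-node $B_j\rightarrow C_j\,|\,\epsilon$ hanging under every literal occurring in it, and the requirement that a parse's leaves exactly cover the given terminal instances $\{C_1,\ldots,C_k\}$ forces at least one selected literal to ``produce'' each clause's terminal. Your construction instead makes the tree shape essentially rigid (a single spine fixed by index checks) and pushes all of the hardness into the parameter machinery: parameters of size $\Theta(n)$ accumulate a truth assignment and a single polynomial-time relation at the root verifies $\phi$. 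Each choice buys something. The paper's version shows that parsing is hard even for the degenerate subclass with trivial parameters (so the result is robust to any future restriction on parameter size or relation complexity --- note the appendix's aside that parameter sizes are implicitly expected to be bounded by a constant, a condition your $\Theta(n)$-bit assignments violate and which would invalidate your reduction if enforced); the cost is that its grammar uses $\epsilon$-productions and unary And-rules and so needs a normalization step. Your version uses a cleaner, standard-form grammar, avoids the covering argument, and directly illuminates why the Composition Sparsity Assumption is the right tractability condition: the $2^{k}$ distinct parameter values of the spine node of size $k$ are exactly the exponentially many table entries that defeat the dynamic program. Two small points to tighten: (i) the root relation evaluates $\phi$ in time $O(k)$, so you should note $k=O(n^3)$ for duplicate-free 3CNF to keep it polynomial in $m_\theta$ as the definition requires; (ii) the rigidity argument you defer is genuinely needed but is routine given the distinct terminals and index checks, so the gap is one of detail rather than of ideas.
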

\begin{proof}
	Below we reduce 3SAT to the parsing problem. 
	
	For a 3SAT CNF formula with $n$ variables and $k$ clauses, we construct a stochastic AOG of polynomial size in $n$ and $k$. The node parameters in this AOG always take the value of null (i.e., no parameter), and accordingly in any And-rule of the AOG the parameter relation always returns true and the parameter function always returns null. For each variable $x_i$, create one Or-node $A_i$, two And-node $X_i$ and $\overline{X_i}$, and two Or-rules $A_i \rightarrow X_i | \overline{X_i}$ with equal probabilities. Create an And-rule $S \rightarrow \{A_1, A_2, \ldots, A_n\}$ where $S$ is the start symbol. For each clause $c_j$, create an Or-node $B_j$,  a terminal node $C_j$ and two Or-rules $B_j \rightarrow C_j | \epsilon$ with equal probabilities. Here $\epsilon$ represents the empty set. For each literal $l$ (which can be either $x_i$ or $\overline{x_i}$ for some $i$), suppose $L$ is the corresponding And-node (i.e., $X_i$ or $\overline{X_i}$), if $l$ appears in one or more clauses $c_{h_1}, c_{h_2}, \ldots, c_{h_m}$, then create an And-rule $L \rightarrow \{B_{h_1}, B_{h_2}, \ldots, B_{h_m}\}$; otherwise create an And-rule $L \rightarrow \epsilon$. 
	Note that the constructed AOG does not conform to the standard definition of AOG in that it contains the empty set symbol $\epsilon$ and that some And-rules may have only one child node. However, the constructed AOG can be converted to the standard form with at most polynomial increase in grammar size. See \cite{lange2009cnf} for a list of CFG conversion approaches, which can be extended for AOGs. For simplicity in proof, we will still use the non-standard form of the constructed AOG below.
	
	We then construct a data sample which simply contains all the terminal nodes with no duplication: $\{C_1, C_2, \ldots, C_k\}$.
	
	We first prove that if the 3SAT formula is satisfiable, then the most likely parse of the data sample can be found (i.e., there exists at least one valid parse). Given a truth assignment that satisfies the 3SAT formula, we can construct a valid parse tree. First of all, the parse tree shall contain the start symbol and hence the production $S \rightarrow \{A_1, A_2, \ldots, A_n\}$. For each variable $x_i$, if it is true in the assignment, then the parse tree shall contain production $A_i \rightarrow X_i$; if it is false, then the parse tree shall contain production $A_i \rightarrow \overline{X_i}$. For each clause $c_j$, select one of its literals that are true and suppose $L$ is the corresponding And-node; then the parse tree shall contain productions $L \rightarrow \{\ldots, B_j, \ldots\} \rightarrow \{\ldots, C_j, \ldots\}$, where the first production is based on the And-rule headed by $L$ and the second production is based on Or-rule $B_j \rightarrow C_j$. In this way, all the terminal nodes in the data sample are covered by the parse tree. Finally, for any $B_k$ node (for some $k$) in the parse tree that does not produce $C_k$, add production $B_k \rightarrow \epsilon$ to the parse tree. The parse tree construction is now complete.
	
	Next, we prove that if the most likely parse of the data sample can be found, then the 3SAT formula is satisfiable. For each variable $x_i$, the parse tree must contain either production $A_i \rightarrow X_i$ or production $A_i \rightarrow \overline{X_i}$ but not both. In the former case, we set $x_i$ to true; in the latter case, we set it to false. We can show that this truth assignment satisfies the 3SAT formula. For each clause $c_j$ in the formula, suppose in the parse tree the corresponding terminal node $C_j$ is a descendant of And-node $L$ (which can be $X_i$ or $\overline{X_i}$ for some $i$). Let $l$ be the literal corresponding to And-node $L$. According to the construction of the AOG, clause $c_j$ must contain $l$. Based on our truth assignment specified above, $l$ must be true and hence $c_j$ is true. Therefore, the 3SAT formula is satisfied.
\end{proof}

Another inference problem of stochastic AOGs is to compute the marginal probability of a data sample. The proof above can be easily adapted to show that this problem is NP-hard as well (with the same AOG construction, one can show that the 3SAT formula is satisfiable iff. the marginal probability is nonzero).

\section{Conversion to Generalized Chomsky Normal Form}\label{sm:sec:cnf}
In our inference algorithm, we assume the input AOG is in a generalized version of Chomsky normal form, i.e., (1) each And-node has exactly two child nodes which must be Or-nodes, (2) the child nodes of Or-nodes must not be Or-nodes, and (3) the start symbol $S$ is an Or-node.

By extending previous approaches for context-free grammars \cite{lange2009cnf}, we can convert any AOG into this generalized Chomsky normal form with the following steps. Both the time complexity of the conversion and the size of the new AOG is polynomial in the size of the original AOG.
\begin{enumerate}
	\item (\textbf{START}) If the start symbol is an And-node, create a new Or-node as the start symbol that produces the original start symbol.
	\item (\textbf{BIN}) For any And-rule that contains more than two nodes on the right-hand side, replace the And-rule with a set of binary And-rules, i.e., convert $A \rightarrow \{x_1, x_2, \ldots, x_n\}$ ($n>2$) to $A_1 \rightarrow \{x_1, x_2\}, A_2 \rightarrow \{A_1, x_3\}, \ldots, A \rightarrow \{A_{n-2}, x_n\}$, where $A_i$ are new And-nodes. We will discuss how to convert parameter relation and function later.
	\item (\textbf{UNIT}) For any Or-rule with an Or-node on the right-hand side, $O_1 \rightarrow O_2$, remove the Or-rule and for each Or-rule $O_2 \rightarrow x$ create a new Or-rule $O_1 \rightarrow x$ (unless it already exists in the grammar).
	\item (\textbf{ALT}) If an And-rule contains an And-node or terminal node on the right-hand side, replace the node with a new Or-node that produces the node.
\end{enumerate}

In the \textbf{BIN} step, we have to binarize the parameter relation $t$ and function $f$ along with the production rule, such that:
\begin{align*}
f(\theta_{x_1}, \theta_{x_2}, \ldots, \theta_{x_n}) &= f_A (\theta_{A_{n-2}},\theta_{x_n}) \\
\theta_{A_{n-2}} &= f_{A_{n-2}}(\theta_{A_{n-3}},\theta_{x_{n-1}}) \\
\vdots\\
\theta_{A_2} &= f_{A_2}(\theta_{A_1},\theta_{x_3}) \\
\theta_{A_1} &= f_{A_1}(\theta_{x_1},\theta_{x_2})
\end{align*}
and
\begin{align*}
t(\theta_{x_1}, \theta_{x_2}, \ldots, \theta_{x_n}) \Leftrightarrow t_A (\theta_{A_{n-2}},\theta_{x_n}) 
\land t_{A_{n-2}}(\theta_{A_{n-3}},\theta_{x_{n-1}}) \\
\cdots
\land t_{A_2}(\theta_{A_1},\theta_{x_3}) 
\land t_{A_1}(\theta_{x_1},\theta_{x_2})
\end{align*}
In some cases (e.g., the example AOG of line drawings in the main text), the parameter relation and function can be naturally factorized into this form.
In general, however, we have to cache multiple parameters of the right-hand side nodes of the And-rule in the intermediate parameters $\theta_{A_1}, \theta_{A_2}, \ldots, \theta_{A_{n-2}}$:
\begin{align*}
\theta_{A_1} &= f_{A_1}(\theta_{x_1},\theta_{x_2}) := \langle \theta_{x_1},\theta_{x_2} \rangle \\
\theta_{A_2} &= f_{A_2}(\theta_{A_1},\theta_{x_3}) := \langle \theta_{x_1},\theta_{x_2},\theta_{x_3} \rangle \\
\vdots\\
\theta_{A_{n-2}} &= f_{A_{n-2}}(\theta_{A_{n-3}},\theta_{x_{n-1}}) := \langle \theta_{x_1},\theta_{x_2},\ldots,\theta_{x_{n-1}} \rangle
\end{align*}
then we define 
\[
f_A (\theta_{A_{n-2}},\theta_{x_n}) := f(\theta_{x_1}, \theta_{x_2}, \ldots, \theta_{x_n})
\]
and
\begin{align*}
& t_{A_1}(\theta_{x_1},\theta_{x_2}) = t_{A_2}(\theta_{A_1},\theta_{x_3}) = \cdots = t_{A_{n-2}}(\theta_{A_{n-3}},\theta_{x_{n-1}}) := \top \\
& t_A (\theta_{A_{n-2}},\theta_{x_n}) := t(\theta_{x_1}, \theta_{x_2}, \ldots, \theta_{x_n})
\end{align*}
Note that the sizes of the intermediate parameters can be polynomial in $n$. This actually violates the requirement that the parameter size shall be upper bounded by a constant. Nevertheless, when running our inference algorithm on the resulting Chomsky normal form AOG, the inference time complexity is only slightly affected, with the last factor $(|X|+|G|)$ changed to a function polynomial in $|X|$ and $|G|$, and hence the condition for tractable inference remains unchanged.

\end{document}